\numberwithin{equation}{section}
\newcommand \E {\mathop{\mbox{\ensuremath{\mathbb{E}}}}\nolimits}
\renewcommand \Pr {\mathop{\mbox{\ensuremath{\mathbb{P}}}}\nolimits}
\newcommand{\cset}[2]{\left\{\, #1 ~\middle|~ #2 \,\right\} }
\newcommand\Reals {{\mathds{R}}}
\newcommand \CA {{\mathcal{A}}}
\newcommand \CM {{\mathcal{M}}}
\newcommand \CS {{\mathcal{S}}}
\newcommand \CT {{\mathcal{T}}}
\newcommand \CZ {{\mathcal{Z}}}
\newcommand \defn {\mathrel{\triangleq}}
\newcommand \argmax{\mathop{\rm arg\,max}}
\newcommand \trace{\mathop{\rm tr}}
\newcommand \norm[1]{\left\|#1\right\|}
\DeclareMathAlphabet{\mathpzc}{OT1}{pzc}{m}{it}
\newcommand \Normal {\mathop{\mathpzc{N}}\nolimits}
\newcommand \Wishart {\mathop{\mathpzc{W}}\nolimits}
\newcommand \Student {\mathop{\mathpzc{Student}}\nolimits}
\newcommand \Uniform{\mathop{\mathpzc{Unif}}\nolimits}
\newcommand \pol {\pi}
\newcommand \Pols {\Pi}
\newcommand \mdp {\mu}
\newcommand \MDPs {\CM}
\newcommand\ind[1]{\mathop{\mbox{\ensuremath{\mathbb{I}}}}\left\{#1\right\}}
\newcommand\dd{\,\mathrm{d}}
\newcommand \MA {\matrixsym{A}}
\newcommand \MV {\matrixsym{V}}
\newcommand \MW {\matrixsym{W}}
\newcommand \vs {\vectorsym{s}}
\newcommand \vx {\vectorsym{x}}
\newcommand \vz {\vectorsym{z}}
\newcommand \vt {\vectorsym{\theta}}
\newcommand \pn[1] {\vz_{[#1]}}
\newcommand \ctx {f}
\newcommand \bel {p}
\newcommand \mbel {\Normal}
\newcommand \pbel {\Wishart}
\newcommand \pmean {\matrixsym{M}}
\newcommand \pcov {\matrixsym{C}}
\newcommand \pwish {\matrixsym{W}}
\newcommand \porder {n}
\newcommand \trans[1] {#1^\top}
\newcommand \noise {\vectorsym{\varepsilon}}
\newcommand \node[1] {\nu_{#1}}
\newcommand \pt {p_t}
\newcommand \CSet {G}
\newcommand \RSet {\hat{G}}
\newcommand \Children[1] {\mathfrak{C}(#1)}
\newcommand \Descendants[1] {\mathfrak{D}_t(#1)}
\newcommand \metric[2] {\psi(#1, #2)}
\newcommand \zooming {\zeta}
\newcommand \depth[1] {d(#1)}
\newcommand \level[1] {\ell(#1)}
\newcommand \kLSTD {K_{\textrm{L}}}
\newcommand \kAPI {K_{A}}
\newcommand \dimS {m}
\newcommand \dimLSTD {m_L}
\newcommand \nsam {n_{s}}
\newcommand \disc {\gamma}
\def\clap#1{\hbox to 0pt{\hss#1\hss}}
\def\mathrlap{\mathpalette\mathrlapinternal}
\def\mathclap{\mathpalette\mathclapinternal}
\def\mathrlapinternal#1#2{%
           \rlap{$\mathsurround=0pt#1{#2}$}}
\def\mathclapinternal#1#2{%
           \clap{$\mathsurround=0pt#1{#2}$}}
\tikzstyle{select}=[rectangle,draw=black,fill=red!20,inner sep=0mm, minimum size=9mm]
\tikzstyle{hidden}=[circle,dashed,draw=black,fill=green!20,inner sep=0mm, minimum size=9mm]
\tikzstyle{observed}=[circle,draw=black,fill=blue!10,inner sep=0mm, minimum size=9mm]
\pgfplotsset{compat=newest} 
\begin{document}

\title{Cover tree Bayesian reinforcement learning}

\author{\name Nikolaos Tziortziotis \email ntziorzi@gmail.com\\
       \addr Department of Computer Science and Engineering\\
       University of Ioannina\\
       GR-45110, Greece
       \AND
       \name Christos Dimitrakakis \email christos.dimitrakakis@gmail.com\\
       \addr Department of Computer Science and Engineering\\
       Chalmers university of technology\\
       SE-41296, Sweden
       \AND
       \name Konstantinos Blekas \email kblekas@cs.uoi.gr\\
       \addr Department of Computer Science and Engineering\\
       University of Ioannina\\
       GR-45110, Greece
     }

\editor{-}

\maketitle
\begin{abstract}
This paper proposes an online tree-based Bayesian approach for reinforcement learning. For inference, we employ a generalised context tree model. This defines a distribution on multivariate Gaussian piecewise-linear models, which can be updated in closed form. The tree structure itself is constructed using the cover tree method, which remains efficient in high dimensional spaces. We combine the model with Thompson sampling and approximate dynamic programming to obtain effective exploration policies in unknown environments. The flexibility and computational simplicity of the model render it suitable for many reinforcement learning problems in continuous state spaces. We demonstrate this in an experimental comparison with a Gaussian process model, a linear model and simple least squares policy iteration.
\end{abstract}

\begin{keywords}
Bayesian inference, non-parametric statistics, reinforcement learning. 
\end{keywords}

\section{Introduction}
In reinforcement learning, an agent must learn how to act in an unknown environment from limited feedback and delayed reinforcement. Efficient learning and planning requires models of the environment that are not only general, but can also be updated online with low computational cost. In addition, probabilistic models allow the use of a number of near-optimal algorithms for decision making under uncertainty. While it is easy to construct such models for small, discrete environments, models for the continuous case have so far been mainly limited to parametric models, which may not have the capacity to represent the environment (such as generalised linear models) and to non-parametric models, which do not scale very well (such as Gaussian processes). 

In this paper, we propose a non-parametric family of tree models, with a data-dependent structure constructed through the cover tree algorithm, introduced by~\citet{cover-tree:icml2006}. Cover trees are data structures that cover a metric space with a sequence of data-dependent partitions. They were initially proposed for the problem of $k$-nearest neighbour search, but they are in general a good method to generate fine partitions of a state space, due to their low complexity, and can be applied to any state space, with a suitable choice of metric. In addition, it is possible to create a statistical model using the cover tree as a basis. Due to the tree structure, online inference has low (logarithmic) complexity.

In this paper, we specifically investigate the case of a Euclidean state space. For this, we propose a model generalising the context tree weighting algorithm proposed by~\citet{willems:context}, combined with Bayesian multivariate linear models. The overall prior can be interpreted as a distribution on piecewise-linear models. We then compare this model with a Gaussian process model, a single linear model, and the model-free method least-squares policy iteration in two well-known benchmark problems in combination with approximate dynamic programming and show that it consistently outperforms other approaches.

The remainder of the paper is organised as follows. 
Section~\ref{sec:setting} introduces the setting, Section~\ref{sec:related-work} discusses related work and Section~\ref{sec:our-contribution} explains our contribution. The model and algorithm are described in Section~\ref{sec:cover-tree-bayesian}. Finally, comparative experiments are presented in Section~\ref{sec:experiments} and we conclude with a discussion of the advantages of cover-tree Bayesian reinforcement learning and directions of future work in Section~\ref{sec:conclusion}.

\subsection{Setting}
\label{sec:setting}
We assume that the agent acts within a fully observable discrete-time Markov decision process (MDP), with a metric state space $\CS$, for example $\CS \subset \Reals^\dimS$. At time $t$, the agent observes the current environment state $\vs_t \in \CS$, takes an action $a_t$ from a discrete set $\CA$, and receives a reward $r_t \in \Reals$. The probability over next states is given in terms of a transition kernel $P_\mdp(S \mid \vs, a) \defn \Pr_\mdp(\vs_{t+1} \in S \mid \vs_{t} = \vs, a_t = a )$. The agent selects its actions using a \emph{policy} $\pol \in \Pols$, which in general defines a conditional distribution $\Pr^\pol(a_t \mid \vs_1, \ldots, \vs_t, a_1, \ldots, a_{t-1}, r_1, \ldots, r_{t-1})$ over the actions, given the history of states and actions. This reflects the learning process that the agent undergoes, when the MDP $\mdp$ is unknown.

The agent's \emph{utility} is $U \defn \sum_{t=0}^\infty \disc^t r_t$, the discounted sum of future rewards, with $\disc \in (0,1)$ a discount factor such that rewards further into the future are less important than immediate rewards. 
The goal of the agent is to maximise its expected utility:
\begin{align}
  \max_{\pol \in \Pols} \E^{\pol}_{\mdp} U
  &=
  \max_{\pol \in \Pols} \E^{\pol}_{\mdp} \sum_{t=0}^\infty \disc^t r_t,
  \label{eq:utility}
\end{align}
where the value of the expectation depends on the agent's policy $\pol$ and the environment $\mdp$. If the environment is known, well-known dynamic programming algorithms can be used to find the optimal policy in the
discrete-state case~\citep{Puterman:MDP:1994}, while many approximate algorithms exist for continuous environments~\citep{BertsekasTsitsiklis:NDP}.
In this case, optimal policies are memoryless and we let $\Pols_1$ denote the set of memoryless policies.  Then MDP and policy define a Markov chain with kernel $P_\mdp^\pol(S \mid \vs, a) = \sum_{a \in A} P_\mdp(S \mid \vs, a) \pol(a \mid \vs)$.

However, since the environment $\mdp$ is unknown, the above maximisation is ill-posed. In the Bayesian framework for reinforcement learning, this problem is alleviated by performing the maximisation conditioned on the agent's belief about the true environment $\mdp$. This converts the problem of reinforcement learning into a concrete, optimisation problem. However, this is generally extremely complex, as we must optimise over all history-dependent policies.

More specifically, the main assumption in Bayesian reinforcement learning is that the environment $\mdp$ lies in a given set of environments $\MDPs$. In addition, the agent must select a subjective prior distribution $\bel(\mdp)$ which encodes its belief about which environments are most likely. The Bayes-optimal expected utility for $\bel$ is:
\begin{align}
  U^*_\bel \defn \max_{\pol \in \Pols^D} \E_{\bel}^{\pol} U
  = 
  \max_{\pol \in \Pols^D} \int_{\MDPs} \left(\E_{\mdp}^{\pol} U\right) \dd{\bel}(\mdp).
  \label{eq:bayes-utility}
\end{align}
Unlike the known $\mdp$ case, the optimal policy may not be memoryless, as our belief changes over time. This makes the optimisation over the policies significantly harder~\citep{duff2002olc}, as we have to consider the set of all history-dependent deterministic policies, which we denote by $\Pols^D \subset \Pols$. In this paper, we employ the simple, but effective, heuristic of Thompson sampling~\citep{thompson1933lou,wyatt1998exploration,dearden98bayesian,strens2000bayesian} for finding policies. This strategy is known by various other names, such as probability matching, stochastic dominance, sampling-greedy and posterior sampling. Very recently~\citet{osband:thompson:nips:2013} showed that it suffers small Bayes-regret relative to the Bayes-optimal policy for finite, discrete MDPs.

The second problem in Bayesian reinforcement learning is the choice of the prior distribution. This can be of critical importance for large or complex problems, for two reasons. Firstly, a well-chosen prior can lead to more efficient learning, especially in the finite-sample regime. Secondly, as reinforcement learning involves potentially unbounded interactions, the computational and space complexity of calculating posterior distributions, estimating marginals and performing sampling become extremely important. The choice of priors is the main focus of this paper. In particular, we introduce a prior  over piecewise-linear multivariate Gaussian models. This is based on the construction of a context tree model, using a cover tree structure, which defines a conditional distribution on local linear Bayesian multivariate models. Since  inference for the model can be done in closed form, the resulting algorithm is very efficient, in comparison with other non-parametric models such as Gaussian processes. The following section discusses how previous work is related to our model.

\subsection{Related work}
\label{sec:related-work}
One component in our model is the \emph{context tree}. Context trees were introduced by~\citet{willems:context} for sequential prediction~\citep[see][for an overview]{Begleiter:VOMM:JAIR:2004}. In this model, a distribution of variable order Markov models for binary sequences is constructed, where the tree distribution is defined through context-dependent weights (for probability of a node being part of the tree) and Beta distributions (for predicting the next observation). A recent extension to switching time priors~\citep{ErvenGrunwald:CatchingUp} has been proposed by \citet{veness2012context}. More related to this paper is an algorithm proposed by \citet{Kozat07} for prediction. This asymptotically converges to the best univariate piecewise linear model in a class of trees with fixed structure.

Many reinforcement learning approaches based on such trees have been proposed, but have mainly focused on the discrete partially observable case~\citep{Daswani2012,DBLP:journals/jair/VenessNHUS11,bellemare:bayesian-recursive:icml:2013,farias:universal}.\footnote{We note that another important work in tree-based reinforcement learning, though not directly related to ours, is that of~\citet{Ernst:TreeRL}, which uses trees for expected utility rather than model estimation.}
However, tree structures can generally be used to perform Bayesian inference in a number of other domains~\citep{paddock2003randomized,Meila+Jordan:mixtures-of-trees:jmlr:2001,wong2010optional}. 

The core of our model is a generalised context tree structure that defines a distribution on multivariate piecewise-linear-Gaussian models. Consequently, a necessary component in our model is a multivariate linear model at each node of the tree. Such models were previously used for Bayesian reinforcement learning in~\citep{ijcai:lbrl} and were shown to perform well relatively to least-square policy iteration (LSPI)~\citep{lagoudakis2003least}.  Other approaches using linear models include~\citep{strehl:linear:nips}, which proves mistake bounds on reinforcement learning algorithms using online linear regression, and \citep{abbeel2005exploration} who use separate linear models for each dimension. Another related approach in terms of structure is~\citep{brunskill:typed-models:jmlr}, which partitions the space into \emph{types} and estimates a simple additive model for each type. 

Linear-Gaussian models are naturally generalised by Gaussian processes (GP). Some examples of GP in reinforcement learning include~\citep{Rasmussen04, Deisenroth2009,Deisenroth2011:pilco}, which focused on a model-predictive approach, while the work of~\citet{Engel05} employed GPs for expected utility estimation. GPs are computationally demanding, in contrast to our tree-structured prior. Another problem with the cited GP-RL approaches is that they employ the marginal distribution in the dynamic programming step. This heuristic ignores the uncertainty about the model (which is implicitly taken into account in equations \ref{eq:bayes-utility}, \ref{eq:monte-carlo}). A notable exception to this is the policy gradient approach employed by \citet{ghavamzadeh:bpga} which uses full Bayesian quadrature. Finally, output dimensions are treated independently, which may not make good use of the data. Methods for efficient dependent GPs such as the one introduced by~\citet{alvarez2011efficient} have not yet been applied to reinforcement learning.

For decision making, this paper uses the simple idea of Thompson sampling~\citep{thompson1933lou,wyatt1998exploration,dearden98bayesian,strens2000bayesian}, which has been shown to be near-optimal in certain settings~\citep{Kaufmann:Thompson,agrawal:thompson,osband:thompson:nips:2013}. 
This avoids the computational complexity of building augmented MDP models~\citep{DBLP:conf/nips/AuerJO08,Asmuth:BOSS,castro2010smarter,araya2012near}, Monte-Carlo tree search~\citep{DBLP:journals/jair/VenessNHUS11}, sparse sampling~\citep{wang:bayesian-sparse-sampling:icml:2005}, stochastic branch and bound~\citep{dimitrakakis:icaart2010} or creating lower bounds on the Bayes-optimal value function~\citep{poupart2006asd,dimitrakakis:mmbi:ewrl:2011}. Thus the approach is reasonable as long as sampling from the model is efficient.

\subsection{Our contribution}
\label{sec:our-contribution}
Our approach is based upon three ideas. The first idea is to employ a cover tree~\citep{cover-tree:icml2006} to create a set of partitions of the state space. This avoids having to prespecify a structure for the tree. The second technical novelty is the introduction of an efficient non-parametric Bayesian conditional density estimator on the cover tree structure. This is a generalised context tree, endowed with a multivariate linear Bayesian model at each node. We use this to estimate the dynamics of the underlying environment. The multivariate models allow for a sample-efficient estimation by capturing dependencies. Finally, we take a sample from the posterior to obtain a piecewise linear Gaussian model of the dynamics. This can be used to generate policies. In particular, from this, we obtain trajectories of simulated experience, to perform approximate dynamic programming (ADP) in order to select a policy. Although other methods could be used to calculate optimal actions, we leave them for future work.

The main advantage of our approach is its generality and efficiency. The posterior calculation and prediction is fully conjugate and can be performed online. At the $t$-th time step, inference takes $O(\ln t)$ time. Sampling from the tree, which need only be done infrequently, is $O(t)$. These properties are in contrast to other non-parametric approaches for reinforcement learning such as GPs.  The most computationally heavy step of our algorithm is ADP. However, once a policy is calculated, the actions to be taken can be calculated in logarithmic time at each step. The specific ADP algorithm used is not integral to our approach and for some problems it might be more efficient to use an online algorithm.

\section{Cover Tree Bayesian RL}
\label{sec:cover-tree-bayesian}
The main idea of cover tree Bayesian reinforcement learning (CTBRL) is to construct a cover tree from the observations, simultaneously inferring a conditional probability density on the same structure, and to then use sampling to estimate a policy. We use a cover tree due to its efficiency compared with e.g. a fixed sequence of partitions or other dynamic partitioning methods such as KD-trees. The probabilistic model we use can be seen as a distribution over piecewise linear-Gaussian densities, with one local linear model for each set in each partition. Due to the tree structure, the posterior can be computed efficiently online.  By taking a sample from the posterior, we acquire a specific piecewise linear Gaussian model. This is then used to find an approximately optimal policy using approximate dynamic programming.

An overview of CTBRL is given in pseudocode in Alg.~\ref{alg:overview}. As presented, the algorithm works in an episodic manner.\footnote{An online version of the same algorithm (still employing Thompson sampling) would move line \ref{alg:ctbrl:adp} to just before line \ref{alg:ctbrl:act}. A fully Bayes online version would ``simply'' take an approximation of the Bayes-optimal action at every step.} When a new episode $k$ starts at time $t_k$, we calculate a new stationary policy by sampling a tree $\mdp_k$ from the current posterior $p_{t_k}(\mdp)$. This tree corresponds to a piecewise-linear model. We draw a large number of rollout trajectories from $\mdp_k$ using an arbitrary exploration policy. Since we have the model, we can use an initial state distribution that covers the space well. These trajectories are used to estimate a near-optimal policy $\pol_k$ using approximate dynamic programming. During the episode, we take new observations using $\pol_k$, while growing the cover tree as necessary and updating the posterior parameters of the tree and the local model in each relevant tree node. 

\begin{algorithm}
  \begin{algorithmic}[1]
    \STATE $k = 0$, $\pol_0 = \Uniform(\CA)$, prior $p_0$ on $\CM$.
    \FOR {$t=1, \ldots, T$}
    \IF {\texttt{episode-end}}
    \STATE $k := k + 1$.
    \STATE Sample model $\mdp_k \sim p_{t}(\mdp)$.
    \STATE Calculate policy $\pol_{k} \approx \argmax_\pol \E_{\mdp_k}^{\pol} U$. \label{alg:ctbrl:adp}
    \ENDIF
    \STATE Observe state $\vs_{t}$.
    \STATE Take action $a_t \sim \pol_k(\cdot \mid \vs_t).$ \label{alg:ctbrl:act}
    \STATE Observe next state $\vs_{t+1}$, reward $r_{t+1}$.
    \STATE Add a leaf node to the tree $\CT_{a_t}$, containing $\vs_{t}$.
    \STATE Update posterior: $p_{t+1}(\mdp) = p_t(\mdp \mid \vs_{t+1}, \vs_t, a_t)$ by updating the parameters of all nodes containing $\vs_{t}$.
    \ENDFOR
  \end{algorithmic}
  \caption{CTBRL (Episodic, using Thompson sampling)}
  \label{alg:overview}
\end{algorithm}
We now explain the algorithm in detail. First, we give an overview of the cover tree structure on which the context tree model is built. Then we show  how to perform inference on the context tree, while Section~\ref{sec:lbrl-model} describes the multivariate model used in each node of the context tree. The sampling approach and the approximate dynamic method are described in Sec.~\ref{sec:thompson}, while the overall complexity of the algorithm is discussed in Sec.~\ref{sec:complexity}.

\subsection{The cover tree structure}
\label{sec:cover-tree}
Cover trees are a data structure that can be applied to any metric space and are, among other things, an efficient method to perform nearest-neighbour search in high-dimensional spaces~\citep{cover-tree:icml2006}. In this paper, we use cover trees to automatically construct a sequence of partitions of the state space. Section~\ref{sec:cover-tree-prop} explains the properties of the constructed cover tree. As the formal construction duplicates nodes, in practice we use a reduced tree where every observed point corresponds to one node in the tree. This is explained in Section~\ref{sec:reduced-tree}. An explanation of how nodes are added to the structure is given in Section~\ref{sec:cover-tree-constr}.
 
\subsubsection{Cover tree properties.}
\label{sec:cover-tree-prop}
To construct a cover tree $\CT$ on a metric space $(\CZ, \psi)$ we require a set of points $D_t = \{\vz_1, \ldots, \vz_t\}$, with $\vz_i \in \CZ$, a metric $\psi$, and a constant $\zooming > 1$. We introduce a mapping function $[\cdot]$ so that the $i$-th tree node corresponds to one point $\pn{i}$ in this set.  The nodes are arranged in \emph{levels}, with each point being replicated at nodes in multiple levels, i.e. we may have $[i]=[j]$ for some $i \neq j$. Thus, a point corresponds to multiple nodes in the tree, but to \emph{at most one node} at any one level. Let $\CSet_n$ denote the set of points corresponding to the nodes at level $n$ of the tree and $\Children{i} \subset \CSet_{n-1}$ the corresponding set of children. 
If $i \in \CSet_n$ then the level of $i$ is $\level{i} = n$. The tree has the
following properties:
\begin{enumerate}
\item Refinement: $\CSet_n \subset \CSet_{n-1}$.
\item Siblings separation: $i, j \in \CSet_n$, $\metric{\pn{i}}{\pn{j}} > \zooming^n$.
\item Parent proximity: If $i \in \CSet_{n-1}$ then $\exists$ a unique $j \in \CSet_{n}$ such that $\metric{\pn{i}}{\pn{j}} \leq \zooming^n$ and $i \in \Children{j}$.
\end{enumerate}
These properties can be interpreted as follows. Firstly lower levels always contain more points. Secondly, siblings at a particular level are always well-separated. Finally, a child must be close to its parent. These properties directly give rise to the theoretical guarantees given by the cover tree structure, as well as methods for searching and adding points to the tree, as explained below.

\subsubsection{The reduced tree} 
\label{sec:reduced-tree}
As formally the cover tree duplicates nodes, in practice we use the \emph{explicit representation}~\citep[described in more detail in Sec.~2 of][]{cover-tree:icml2006}. This only stores the top-most tree node $i$ corresponding to a point $\pn{i}$. We denote this \emph{reduced tree} by $\hat{\CT}$. The \emph{depth} $\depth{i}$ of node $i \in \hat{\CT}$ is equal to its number of ancestors, with the root node having a depth of $0$. After $t$ observations, the set of nodes containing a point $\vz$, is: 
\begin{align}
  \RSet_t(\vz) \defn \cset{i \in \hat{\CT}}{\vz \in B_i},
\end{align}
where $B_i = \cset{\vz \in \CZ}{\metric{\pn{i}}{\vz} \leq \zooming^{\depth{i}}}$ is the neighbourhood of $i$. Then $\RSet_t(\vz)$ forms a path in the tree, as each node only has one parent, and can be discovered in logarithmic time through the \texttt{Find-Nearest} function~\citep[][The. 5]{cover-tree:icml2006}. This fact allows us to efficiently search the tree, insert new nodes, and perform inference.

\subsubsection{Inserting nodes in the cover tree}
\label{sec:cover-tree-constr}
The cover tree insertion we use is only a minor adaptation of the \texttt{Insert} algorithm by \citet{cover-tree:icml2006}. For each action $a \in \CA$, we create a different reduced tree $\hat{\CT}_a$, over the state space, i.e. $\CZ = \CS$, and build the tree using the metric $\metric{\vs}{\vs'} = \|\vs - \vs'\|_1$.  

At each point in time $t$, we obtain a new observation tuple $\vs_t, a_t, \vs_{t+1}$. We select the tree $\hat{\CT}_{a_t}$ corresponding to the action. Then, we traverse the tree, decreasing  $d$ and keeping a set of nodes $Q_d \subset G_d$ that are $\zooming^{d}$-close to $\vs_t$. We stop whenever $Q_d$ contains a node that would satisfy the \emph{parent proximity} property if we insert the new point at $d-1$, while the children of all other nodes in $Q_d$ would satisfy the \emph{sibling separation} property. This means that we can now insert the new datum as a child of that node.\footnote{The exact implementation is available in the CoverTree class in \cite{beliefbox}.} Finally, the next state $\vs_{t+1}$ is only used during the inference process, explained below.

\subsection{Generalised context tree inference}
\label{sec:ct-model}
In our model, each node $i \in \hat{\CT}$ is associated with a particular Bayesian model. The main problem is how to update the individual models and how to combine them. Fortunately, a closed form solution exists due to the tree structure. We use this to define a \emph{generalised context tree}, which can be used for inference. 

As with other tree models~\citep{willems:context,Ferguson:Prior-Polya:1974}, our model makes predictions by marginalising over a set of simpler models. Each node in the context tree is called a \emph{context}, and each context is associated with a specific local model. At time $t$,  given an observation $\vs_t = \vs$ and an action $a_t = a$, we calculate the marginal (predictive) density $\pt$ of the next observation:
\begin{equation}
  \pt(\vs_{t+1} \mid \vs_t, a_t)
  =
  \sum_{c_t} 
  \pt(\vs_{t+1} \mid \vs_t, c_t)
  \pt(c_t \mid \vs_t, a_t),
\end{equation}
where we use the symbol $\pt$ throughout for notational simplicity to denote marginal distributions from our posterior at time $t$.
Here, $c_t$ is such that if $\pt(c_t = i \mid \vs_t, a_t) > 0$, then the current state is within the neighbourhood of $i$-th node of the reduced cover tree $\hat{\CT}_{a_t}$, i.e. $\vs_t \in B_i$.

For Euclidean state spaces, the $i$-th component density $\pt(\vs_{t+1} \mid \vs_t, c_t = i)$ employs a linear Bayesian model, which we describe in the next section.
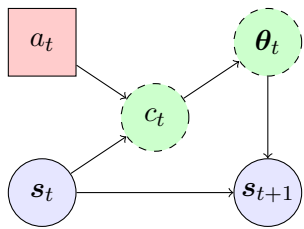
\begin{figure}[t!]
  \centering
  \begin{tikzpicture}
    \node[observed] at (0,0) (s1) {$\vs_{t}$};
    \node[observed] at (3,0) (s2) {$\vs_{t+1}$};
    \node[select] at (0,2) (a1) {$a_{t}$};
    \node[hidden] at (1.5,1) (c1) {$c_{t}$};
    \node[hidden] at (3,2) (t1) {$\vt_{t}$};
    \draw[->] (s1) edge (s2);
    \draw[->] (s1) edge (c1);
    \draw[->] (a1) edge (c1);
    \draw[->] (c1) edge (t1);
    \draw[->] (t1) edge (s2);
  \end{tikzpicture}
  \caption{The generalised context tree graphical model. Blue circles indicate observed variables. Green dashed circles indicate latent variables. Red rectangles indicate choice variables. Arrows indicate dependencies. Thus, the context distribution at time $t$ depends on both the state and action, while the parameters depend on the context. The next state depends on the action only indirectly.}
  \label{fig:context-tree-graph}
\end{figure}
The graphical structure of the model is shown in simplified form in Fig.~\ref{fig:context-tree-graph}. The context at time $t$ depends only on the current state $\vs_t$ and action $a_t$. The context corresponds to a particular local model with parameter $\vt_t$, which defines the conditional distribution.

The probability distribution $\pt(c_t \mid \vs_t, a_t)$ is determined through \emph{stopping probabilities}. More precisely, we set it be equal to the probability of stopping at the $i$-th context, when performing a walk from the leaf node containing the current observation towards the root, stopping at the $j$-th node with probability $w_{j,t}$ along the way:
\begin{equation}
  \label{eq:stopping}
  \pt(c_t = i \mid \vs_t, a_t)
  =
  w_{i,t} \prod_{j \in \Descendants{i}} (1 - w_{j,t}),
\end{equation}
where $\Descendants{i}$ are the descendants of $i$ that contain the observation $\vs_t$. This forms a path from $i$ to the leaf node containing $\vs_t$. Note that $w_{0,t}=1$, so we always stop whenever we reach the root. Due to the effectively linear structure of the relevant tree nodes, the stopping probability parameters $w$ can be updated in closed form, as shown in \citep[Theorem 1]{dimitrakakis:aistats:2010} via Bayes' theorem as follows:
\begin{equation}
  w_{i,t+1}
  = 
  \frac{\pt(\vs_{t+1} \mid \vs_t, c_t = i) w_{i,t}}
  {\pt(\vs_{t+1} \mid \vs_t, c_t \in  \{i\} \cup \Descendants{i})}.
  \label{eq:weight-update}
\end{equation}
Since there is a different tree for each action,  $c_ti$ uniquely identifies a tree, the action does not need to enter in the conditional expressions above. Finally, it is easy to see, by marginalisation and the definition of the stopping probabilities, that the denominator in the above equation can be calculated recursively:
\begin{multline}
  \pt(\vs_{t+1} \mid \vs_t, c_t \in \{i\} \cup \Descendants{i})
  =
  w_{i,t}
  \pt(\vs_{t+1} \mid \vs_t, c_t = i)
  +
  (1 - w_{i,t})
  \pt(\vs_{t+1} \mid \vs_t, c_t \in \Descendants{i}).
  \label{eq:denominator}
\end{multline}
Consequently, inference can be performed with a simple forward-backward sweep through a single tree path. In the forward stage, we compute the probabilities of the denominator, until we reach the point where we have to insert a new node. Whenever a new node is inserted in the tree, its weight parameter is initialised to $2^{-\depth{i}}$. We then go backwards to the root node, updating the weight parameters and the posterior of each model. The only remaining question is how to calculate the individual predictive marginal distributions for each context $i$ in the forward sweep and how to calculate their posterior in the backward sweep. In this paper, we associate a linear Bayesian model with each context, which provides this distribution.

\subsection{The linear Bayesian model}
\label{sec:lbrl-model}
In our model we assume that, given $c_t  = i$, the next state $\vs_{t+1}$ is given by a linear transformation of the current state and additive noise $\noise_{i,t}$:
\begin{align}
  \vs_{t+1} &= \MA_i \vx_t + \noise_{i,t},
  &
  \vx_t
  &\defn
  \begin{pmatrix}
    \vs_t\\
    1
  \end{pmatrix},
  \label{eq:linear-model}
\end{align}
where  $\vx_t$ is the current state vector augmented by a unit basis.\footnote{While other transformations of $\vs_t$ are possible, we do not consider them in this paper.} In particular, each context models the dynamics via a Bayesian multivariate linear-Gaussian model. For the $i$-th context, there is a different (unknown) parameter pair $(\MA_i, \MV_i)$ where $\MA_i$ is the \emph{design} matrix and $\MV_i$ is the \emph{covariance} matrix. Then the next state distribution is:
\begin{align}
  \vs_{t+1} &\mid \vx_t=\vx, c_t = i \sim \Normal(\MA_i \vx, \MV_i).
\end{align}
Thus, the parameters $\vt_t$ which are abstractly shown in Fig.~\ref{fig:context-tree-graph} correspond to the two matrices $\MA, \MV$.
We now define the conditional distribution of these matrices given $c_t = i$.

We can model our uncertainty about these parameters with an appropriate prior distribution $\bel_0$. In fact, a conjugate prior exists in the form of the \emph{matrix inverse-Wishart normal} distribution.
In particular, given $\MV_i = \MV$, the distribution for $\MA_i$ is matrix-normal, while the marginal distribution of $\MV_i$ is inverse-Wishart:
\begin{align}
  \MA_i \mid \MV_i = \MV \sim \mbel(\MA_i &\mid \underbrace{\pmean, \pcov}_{\mathclap{\text{prior parameters}}},
  \MV)
  \label{eq:sample-mean}
  \\
  \MV_i \sim \pbel(\MV_i &\mid \overbrace{\MW, n}).
  \label{eq:sample-covariance}
\end{align}
Here $\mbel$ is the prior on design matrices, which has a matrix-normal distribution, conditional on the covariance and two prior parameters: $\pmean$, which is the prior mean and $\pcov$ which is the prior covariance of the dependent variable (i.e. the output). Finally, $\pbel$ is the marginal prior on covariance matrices, which has an inverse-Wishart distribution with $\pwish$ and $n$. More precisely, the distributions have the following forms:
\begin{align*}
  \mbel(\MA_i \mid \pmean, \pcov, \MV)
  &\propto
  e^{-\frac{1}{2} \trace\left[\trans{(\MA_i - \pmean)}\MV^{-1}(\MA_i - \pmean) \pcov \right]}
  \\
  \pbel(\MV \mid \pwish, \porder)
  &\propto
  |\MV^{-1} \pwish/2|^{n/2}  e^{-\frac{1}{2} \trace(\MV^{-1} \pwish)}.
\end{align*}
Essentially, the model extends the classic Bayesian linear
regression model~\citep[e.g][]{Degroot:OptimalStatisticalDecisions} to the multivariate case via vectorisation of the mean matrix. Since the prior
  is conjugate, it is relatively simple to calculate the posterior after each observation. For simplicity, and to limit the total number of prior parameters we have to select, we use the same prior parameters $(\pmean_i, \pcov_i, \pwish_i, \porder_i)$ for all contexts in the tree.


To integrate this with inference in the tree, we must define the marginal distribution used in the nominator of \eqref{eq:weight-update}. This is a multivariate Student-$t$ distribution, so if the posterior parameters for context $i$ at time $t$ are $(\pmean_i^t, \pcov_i^t, \pwish_i^t, \porder_i^t)$, then this is:
\begin{equation}
  \label{eq:student}
  p_t(\vs_{t+1} \mid \vx_t = \vx, c_t = i)
  =
  \Student(\pmean_i^t, \pwish_i^t/z_i^t, 1 + \porder_i^t),
\end{equation}
where $z_i^t = 1 - \trans{\vx}(\pcov_i^t + \vx \trans{\vx})^{-1} \vx$.


\subsubsection{Regression illustration}
\definecolor{mycolor1}{rgb}{0.9,0.9,0.9}
\definecolor{mycolor2}{rgb}{0.3,0.3,1.0}
\definecolor{mycolor3}{rgb}{0.3,1.0,0.3}
\definecolor{mycolor4}{rgb}{1.0,0.3,0.3}
\begin{figure}[ht]
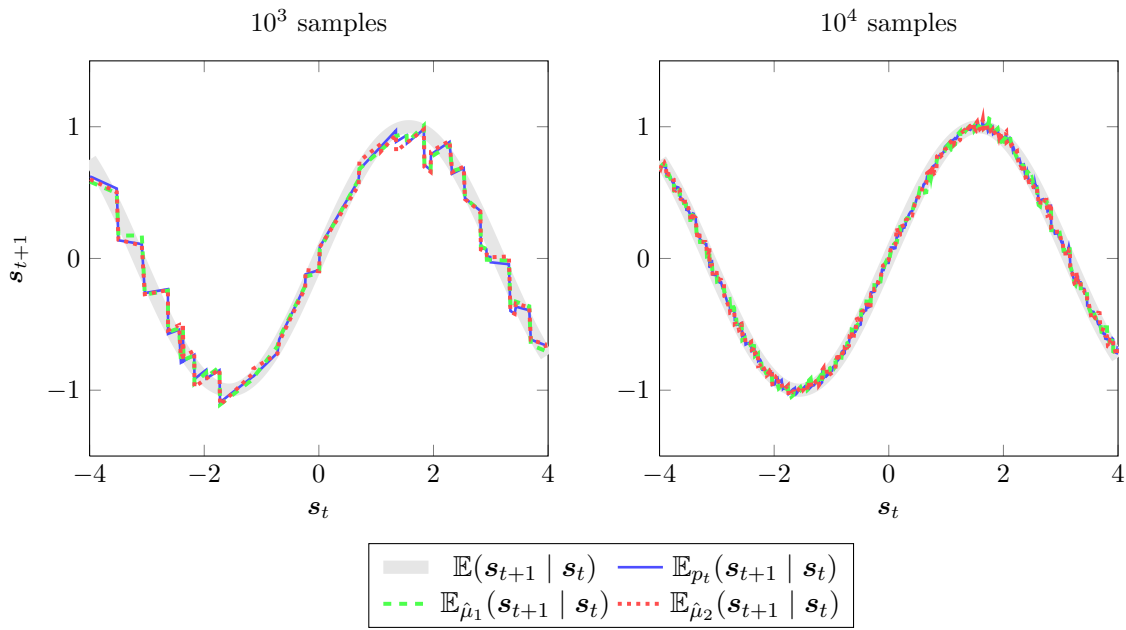

  \centering{
    \subfigure{\small
      \hspace{-5em}
      \input{figures/regression_1000.tikz}
      \hfill
      \input{figures/regression_10000.tikz}
    }
    \ref{leg:regression}
  }
  \caption{Regression illustration. We plot the expected value for the real distribution, the marginal, as well as two sampled models $\hat{\mdp}_1, \hat{\mdp}_2 \sim p_t(\mdp)$.}
  \label{fig:regression}
\end{figure}
 An illustration of inference using the generalised context tree is given in Fig.~\ref{fig:regression}, where the piecewise-linear structure is evident. The $\vs_t$ variates are drawn uniformly in the displayed interval, while $\vs_{t+1} \mid \vs_t = \vs \sim \Normal(\sin(\vs), 0.1)$, i.e. drawn a normal distribution with mean $\sin(\vs_t)$ and variance $0.1$. The plot shows the marginal expectation $\E_{\pt}$, as well as the expectation from two different models sampled from the posterior $\pt(\mu)$.

\subsection{Approximating the optimal policy with Thompson sampling}
\label{sec:thompson}
Many algorithms exist for finding the optimal policy for a specific MDP $\mdp$, or for calculating the expected utility of a given policy for that MDP. Consequently, a simple idea is to draw MDP samples $\mdp_i$ from the current posterior distribution and then calculate the expected utility of each. This can be used to obtain approximate lower and upper bounds on the Bayes-optimal expected utility by maximising over the set of memoryless policies $\Pols_1$. Taking $K$ samples, allows us to calculate the upper and lower bounds with accuracy $O(1/\sqrt{K})$.
\begin{align}
  \max_{\pol \in \Pols_1} \E_{\bel}^\pol U 
  &\approx \max_{\pol \in \Pols_1} \frac{1}{K} \sum_{i=1}^K \E_{\mdp_i}^{\pol} U
  \leq \frac{1}{K} \sum_{i=1}^K \max_{\pol \in \Pols_1} \E_{\mdp_i}^{\pol} U,
  &
  \mdp_i \sim \bel_t(\mdp).
  \label{eq:monte-carlo}
\end{align}
We consider only the special case $K = 1$, i.e. when we only sample a single MDP. Then the two values are identical and we recover Thompson sampling. The main problems we have to solve now is how to sample a model and how to calculate a policy for the sampled model.

\subsubsection{Sampling a model from the posterior. }
Each model $\mdp$ sampled from the posterior corresponds to a particular choice of tree parameters. Sampling is done in two steps. The first generates a partition from the tree distribution and the second step generates a linear model for each context in the partition.

The first step is straightforward. We only need to sample a set of weights $\hat{w}_{i} \in \{0,1\}$ such that $\Pr(\hat{w}_{i} = 1) = w_{i,t}$, as shown in \citep[][Rem.~2]{dimitrakakis:aistats:2010}. This creates a \emph{partition}, with one Bayesian multivariate linear model responsible for each context in the partition.

The second step is to sample a design and covariance matrix pair $(\hat{\MA}_i, \hat{\MV}_i)$ for each context $i$ in the partition. This avoids sampling matrices for contexts not part of the sampled tree. As the model suggests, we can first sample the noise covariance by plugging the posterior parameters in \eqref{eq:sample-covariance} to obtain $\hat{\MV}_i$. Sampling from this distribution can be done efficiently using the algorithm suggested by~\cite{smith1972wishart}. We then plug in $\hat{\MV}_i$ into the conditional design matrix posterior \eqref{eq:sample-mean} to obtain a design matrix $\hat{\MA}_i$ by sampling from the resulting matrix-normal distribution.

The final MDP sample $\mdp$ from the posterior has two elements. Firstly, a set of contexts $\hat{C}^\mdp \subset \bigcup_{a \in \CA} \hat{\CT_a}$, from all action trees. This set is a partition with associated mapping $\ctx^\mdp : \CS \times \CA \to \hat{C}^\mdp$. Secondly, a set of associated design and covariance matrices $\cset{(A^\mdp_i, V^\mdp_i)}{i \in \hat{C}^\mdp}$ for each context. Then the prediction of the sampled MDP is:
\begin{equation}
  \Pr_\mdp(\vs_{t+1} \mid \vs_t, a_t) = \Normal(A^\mdp_{f(\vs_t, a_t)} \vx_t, V^\mdp_{f(\vs_t, a_t)}),
  \label{eq:prediction}
\end{equation}
where $\vx_t$ is given in \eqref{eq:linear-model}.
\subsubsection{Finding a policy for a sample via ADP}
\label{sec:api}
In order to calculate an optimal policy $\pol^*(\mdp)$ for $\mdp$, we generate a large number of trajectories from $\mdp$ using a uniform policy. After selecting an appropriate set of basis functions, we then employ a variant of the least-squares policy iteration (LSPI~\citep{lagoudakis2003least}) algorithm, using least-squares temporal differences (LSTD~\cite{bradtke1996linear}) rather than LSTDQ. This is possible because since we have $\mdp$ available, we have access to (\ref{eq:prediction}) and it makes LSPI slightly more efficient. 

More precisely, consider the \emph{value function} $V_\mdp^\pol : \CS \to \Reals$, defined as:
\begin{align}
  \label{eq:value-function}
  V_\mdp^\pol(\vs) \defn \E_{\mdp}^{\pol} \left(U \mid \vs_t = \vs\right).
\end{align}
Unfortunately, for continuous $\CS$ finding an optimal policy requires approximations. A common approach is to make use of the fact that:
\begin{equation}
  \label{eq:bellman}
  V_\mdp^\pol(\vs) = \rho(\vs)
  + \disc \int_{\CS} V_\mdp^\pol(\vs') \dd{P}_{\mdp}^{\pol}(\vs'\mid\vs),
\end{equation}
where we assume for simplicity that $\rho(\vs)$ is the reward obtained at state $\vs$. The conditional measure
$P^\pol_\mdp$ is the transition kernel on $\CS$ induced by $\mdp, \pol$, introduced in Section~\ref{sec:setting}.  We then select a parametric family $v_\omega : \CS \to \Reals$ with parameter $\omega \in \Omega$ and minimise:
\begin{equation}
  \label{eq:api}
  h(\omega) + \int_\CS \norm{v_\omega(\vs) - \rho(\vs) - \disc \int_\CS v_\omega(\vs') \dd{\hat{P}}_{\mdp}^{\pol}(\vs'|\vs)} \dd{\chi}(\vs),
\end{equation}
where $h$ is a regularisation term, $\chi$ is an appropriate measure on $\CS$ and $\hat{P}_\mdp^\pol$ is an empirical estimate of the transition kernel, used to approximate the respective integral that uses $P_\mdp^\pol$. As we can take an arbitrary number of trajectories from $\mdp, \pol$, this can be as accurate as our computational capacity allows. 

In practice, we minimise \eqref{eq:api} with a generalised linear model (defined on an appropriate basis) for $v_\omega$ while $\chi$ need only be positive on a set of representative states. Specifically, we employ a variant of the least-squares policy iteration (LSPI~\citep{lagoudakis2003least}) algorithm, using the least-squares temporal differences (LSTD~\cite{bradtke1996linear}) for the minimisation of \eqref{eq:api}. Then the norm is the euclidean norm and the regularisation term is $h(\omega) = \lambda \|\omega\|$. In order to estimate the inner integral, we take $\kLSTD \geq 1$ samples from the model so that
\begin{align}
\hat{P}_{\mdp}^{\pol}(\vs' \mid \vs)
&\defn
\frac{1}{\kLSTD}
\sum_{i=1}^{\kLSTD} \ind{\vs^i_{t+1} = \vs' \mid \vs^i_t = \vs},
\label{eq:lstd-approximation}
\\
\vs^{i}_{t+1} & \mid \vs^{i}_t = \vs \sim P^\pol_\mdp(\cdot \mid \vs), \nonumber
\end{align}
where $\ind{\cdot}$ is an indicator function and  
$P_{\mdp}^{\pol}$ is decomposable in known terms. Equation \eqref{eq:lstd-approximation} is also used for action selection in order to calculate an approximate expected utility $q_\omega(\vs, a)$ for each state-action pair $(\vs, a)$:
\begin{align}
  q_\omega(\vs, a) \defn \rho(s) + \disc \int_{\mathrlap{\CS}} v_\omega(\vs') \dd{\hat{P}}_{\mdp}^{\pol}(\vs'|\vs)
  \label{eq:approximate-q}
\end{align}
Effectively, this approximates the integral via sampling. This may add a small amount\footnote{Generally, this error is bounded by $O(\kLSTD^{-1/2})$.} of additional stochasticity to action selection, which can be reduced\footnote{We remind the reader that Thompson sampling itself results in considerable exploration by sampling an MDP from the posterior. Thus, additional randomness may be detrimental.} by increasing $\kLSTD$.

Finally, we optimise the policy by approximate policy iteration. At the $j$-th iteration we obtain an improved policy $\hat{\pol}_j(a \mid \vs) \propto \Pr[a \in \argmax_{a' \in \CA} q_{\omega_{j-1}}(\vs, a')]$ from $\omega_{j-1}$ and then estimate $\omega_j$ for the new policy. 

\subsection{Complexity}
\label{sec:complexity}
We now analyse the computational complexity of our approach, including the online complexity of inference and decision making, and of the sampling and ADP taking place every episode. It is worthwhile to note two facts. Firstly, that the complexity bounds related to the cover tree depend on a constant $c$, which however depends on the distribution of samples in the state space. In the worst case (i.e. a uniform distribution), this is bounded exponentially in the dimensionality of the actual state space. While we do not expect this to be the case in practice, it is easy to construct a counterexample where this is the case. Secondly, that the complexity of the ADP step  is largely independent of the model used, and mostly depends on the number of trajectories we take in the sampled model and the dimensionality of the feature space.

First, we examine the total computation time that is required to construct the tree.
\begin{corollary}
  Cover tree construction from $t$ observations takes $O(t \ln t)$ operations.
\end{corollary}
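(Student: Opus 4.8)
The plan is to reduce the statement to a per-insertion cost and then sum. Since the tree is built incrementally, one point at a time, the total construction cost is $\sum_{i=1}^{t} T_i$, where $T_i$ denotes the cost of inserting the $i$-th observation into a tree that already holds $i-1$ points. Thus the entire task reduces to showing $T_i = O(\ln i)$; granting that, $\sum_{i=1}^{t} O(\ln i) = O(\ln(t!)) = O(t \ln t)$ by Stirling's approximation, which is exactly the claimed bound.

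First I would recall that the insertion routine of Section~\ref{sec:cover-tree-constr} is the \texttt{Insert} procedure of \citet{cover-tree:icml2006} with only cosmetic changes, so that inserting a point amounts to descending the reduced tree while maintaining the candidate set $Q_d$ of $\zooming^{d}$-close nodes, exactly as in the \texttt{Find-Nearest} traversal. By the property established in Section~\ref{sec:reduced-tree}, the set of nodes whose neighbourhood contains a query point forms a single root-to-leaf path discoverable in logarithmic time, and the insertion traversal follows the same path structure.

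Next I would bound the two factors making up $T_i$. The number of levels visited is $O(\ln i)$: by \emph{refinement} together with \emph{sibling separation}, the covering radius decreases geometrically in $\zooming$ from one level to the next, so the explicit (reduced) tree has depth logarithmic in the number of stored points, with the hidden constant controlled by the expansion constant $c$. At each visited level the work is $O(1)$: the candidate set $Q_d$ has size bounded by a constant depending only on $c$, because the \emph{sibling separation} and \emph{parent proximity} properties impose a packing constraint that limits the number of children of any node lying within a ball of the relevant radius. Multiplying the two factors gives $T_i = O(\ln i)$, uniformly in the shape of the data up to the constant $c$, and summing over the $t$ insertions yields the result.

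The hard part is not the summation but justifying the constant-work-per-level and logarithmic-depth claims rigorously, both of which hinge on the expansion constant $c$ and the packing arguments internal to the cover-tree analysis. Since these are precisely the guarantees proved in \citet[][Sec.~2 and Thm.~5]{cover-tree:icml2006}, I would invoke them directly rather than re-derive them, and would note the caveat (discussed in Section~\ref{sec:complexity}) that $c$ can be exponential in the ambient dimension in the worst case, so that the $O(t \ln t)$ bound carries a dimension-dependent hidden constant.
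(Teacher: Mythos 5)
Your proof is correct and takes essentially the same route as the paper's: both reduce the claim to the per-insertion cost being logarithmic, cite \citep[][Theorems 5, 6]{cover-tree:icml2006} for that bound, and sum over the $t$ insertions, with your Stirling estimate $\ln(t!) = O(t \ln t)$ being just the paper's $\sum_{k=1}^t \ln k \leq t \ln t$. Your additional sketch of the depth and packing arguments, and the caveat about the expansion constant $c$, match the paper's surrounding discussion in the complexity section but are not needed beyond the citation.
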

\begin{proof}
  In the cover tree, node insertion and query are $O(\ln t)$\citep[][Theorems 5, 6]{cover-tree:icml2006}.
  Then note that $\sum_{k=1}^t \ln k \leq \sum_{k=1}^t \ln t = t \ln t$.
\end{proof}
At every step of the process, we must update our posterior parameters. Fortunately, this also takes logarithmic time as we only need to perform calculations for a single path from the root to a leaf node.
\begin{lemma}
  If $\CS \subset \Reals^\dimS$, then inference at time step $t$ has complexity $O(\dimS^3 \ln t)$.
\end{lemma}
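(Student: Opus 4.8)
The plan is to decompose the per-step inference cost into two pieces: locating the relevant path in the tree, and the forward--backward sweep that updates the weights and local models along it. First I would recall from Section~\ref{sec:reduced-tree} that after $t$ observations the set $\RSet_t(\vs_t)$ of nodes whose neighbourhood contains the current state forms a single root-to-leaf path in the reduced tree $\hat{\CT}_{a_t}$, and that this path is located by the \texttt{Find-Nearest} routine in $O(\ln t)$ time~\citep[][The.~5]{cover-tree:icml2006}. Since each action owns its own tree and $|\CA|$ is a constant, only one tree is touched per step, so it suffices to bound the work performed on this path.

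The second step is to bound the length of the path, i.e.\ the explicit depth of the deepest node containing $\vs_t$. The cover tree guarantees of~\citep[][Theorems~5,~6]{cover-tree:icml2006} give logarithmic insertion and query depth, so the path $\RSet_t(\vs_t)$ consists of $O(\ln t)$ nodes. Next I would account for the per-node cost of the sweep. In the forward pass, at each context $i$ on the path we evaluate the predictive density~\eqref{eq:student}, whose only nontrivial ingredient is $z_i^t = 1 - \trans{\vx}(\pcov_i^t + \vx\trans{\vx})^{-1}\vx$; forming and solving with the $(\dimS+1)\times(\dimS+1)$ matrix $\pcov_i^t + \vx\trans{\vx}$ costs $O(\dimS^3)$, while the recursive combination of these densities in~\eqref{eq:denominator} is $O(1)$. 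In the backward pass, updating the stopping weight through~\eqref{eq:weight-update} is $O(1)$ once the densities are known, and the conjugate update of the matrix-normal inverse-Wishart parameters $(\pmean_i^t,\pcov_i^t,\pwish_i^t,\porder_i^t)$ amounts to a constant number of products and one inverse of matrices of dimension $O(\dimS)$, again $O(\dimS^3)$. Multiplying the $O(\dimS^3)$ per-node cost by the $O(\ln t)$ nodes of the path, and noting that path location is itself only $O(\ln t)$, yields the claimed $O(\dimS^3\ln t)$.

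The main obstacle I anticipate is making the per-node $O(\dimS^3)$ claim fully rigorous. The subtle point is that the matrix-normal prior stores its covariance in the factored (Kronecker) form through $\MV_i$ and $\pcov_i^t$ separately, so one must verify that the posterior update preserves this factorisation and can therefore be carried out with a constant number of dense operations on $O(\dimS)$-dimensional matrices, rather than incurring the much larger cost of explicitly manipulating the $\dimS^2\times\dimS^2$ covariance of the vectorised mean. A secondary subtlety is the path-length bound, since the logarithmic depth of the cover tree hides the expansion constant $c$ discussed at the start of Section~\ref{sec:complexity}; I would state the bound as $O(\ln t)$ with $c$ absorbed into the constant, consistent with the convention already adopted for the construction corollary above.
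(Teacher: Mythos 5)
Your proof is correct and follows essentially the same route as the paper's: the path of contexts containing the current observation has length $O(\ln t)$ by the cover tree depth bound, and each node on it costs $O(\dimS^3)$ for the conjugate posterior update and marginal (Student-$t$) computation, giving the product bound. Your additional remarks on the Kronecker-factored matrix-normal update and the hidden expansion constant are finer-grained than the paper's one-line treatment, but they elaborate rather than alter the argument.
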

\begin{proof}
  At every step, we must perform inference on a number of nodes equal to the length of the path containing the current observation. This is bounded by the depth of the tree, which is in turn bounded by $O(\ln t)$ from \citep[][Lem. 4.3]{cover-tree:icml2006}. Calculating \eqref{eq:weight-update} is linear in the depth. For each node, however, we must update the linear-Bayesian model, and calculate the marginal distribution. Each requires inverting an $\dimS \times \dimS$ matrix, which has complexity $O(\dimS^3)$. 
\end{proof}
Finally, at every step we must choose an action through value function look-up. This again takes logarithmic time, but there is a scaling depending on the complexity of the value function representation.
\begin{lemma}
  If the LSTD basis has dimensionality $\dimLSTD$, then taking a decision at time $t$ has complexity $O(\kLSTD \dimLSTD \ln t)$.
  \label{lem:lstd-decision}
\end{lemma}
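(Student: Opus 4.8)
The plan is to charge the cost of a single decision at time $t$ to two essentially independent subroutines---locating the current state in the cover tree, and forming the Monte-Carlo estimate of the action values $q_\omega(\vs,\cdot)$ from \eqref{eq:approximate-q}---and then to combine their costs, using that the action set $\CA$ has fixed size independent of $t$.

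First I would bound the cost of locating the relevant context. For each action $a \in \CA$, action selection requires identifying the context $\ctx(\vs,a)$ containing the current state $\vs$ in the reduced tree $\hat{\CT}_a$. As described in Section~\ref{sec:reduced-tree}, this is precisely a \texttt{Find-Nearest} query along the path of nodes whose neighbourhoods contain $\vs$; its running time is $O(\ln t)$ by \citep[][The.~5]{cover-tree:icml2006}, because the path length is bounded by the tree depth, itself $O(\ln t)$ by \citep[][Lem.~4.3]{cover-tree:icml2006}. Since $|\CA|$ is a constant, the aggregate context-lookup cost stays $O(\ln t)$.

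Next I would account for the value-function evaluation. Once the context of $(\vs,a)$ is fixed, computing $q_\omega(\vs,a)$ reduces to approximating the inner integral in \eqref{eq:approximate-q} by the $\kLSTD$-sample estimator \eqref{eq:lstd-approximation}. Drawing each successor $\vs^i_{t+1}$ from the sampled linear-Gaussian model is $O(1)$ in $t$ given the located context and its matrices $(A^\mdp_i, V^\mdp_i)$, while evaluating the generalised linear value function $v_\omega(\vs^i_{t+1}) = \trans{\omega}\basis(\vs^i_{t+1})$ is a single inner product over the $\dimLSTD$-dimensional basis, hence $O(\dimLSTD)$. Summing over the $\kLSTD$ samples and the constant number of actions gives $O(\kLSTD \dimLSTD)$ for this stage. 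Adding the two stages yields $O(\ln t + \kLSTD \dimLSTD)$, which I would bound above by the product $O(\kLSTD \dimLSTD \ln t)$, using $\max(a,b) \le ab$ when each factor is at least one (here $\kLSTD \ge 1$ by assumption, $\dimLSTD \ge 1$, and $\ln t \ge 1$ for $t$ large), matching the claim.

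The main obstacle is conceptual rather than computational: I must verify that the value-function evaluation is genuinely decoupled from the tree, i.e. that $v_\omega$ is represented over the fixed LSTD basis $\basis$ rather than over the cover-tree partition, so that no additional per-sample logarithmic traversal is incurred; and that the context for each $(\vs,a)$ need only be located once per action, not once per Monte-Carlo sample. Provided the sampled model $\mdp$ caches its partition and local matrices after the episode's sampling step (as in Section~\ref{sec:thompson}), both points hold and the stated bound follows.
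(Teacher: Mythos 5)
Your proof is correct and follows essentially the same route as the paper's: an $O(\ln t)$ path search in each (constant-cardinality) action tree, plus $\kLSTD$ Monte-Carlo samples each costing one $\dimLSTD$-dimensional inner product, combined into the stated bound. You are in fact slightly more explicit than the paper, which leaves implicit both the sum-to-product step $O(\ln t + \kLSTD\dimLSTD) \subseteq O(\kLSTD\dimLSTD\ln t)$ and the observation that the context need only be located once per action rather than once per sample.
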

\begin{proof}
  To take a decision we merely need to search in each action tree to find a corresponding path. This takes $O(\ln t)$ time for each tree. After Thompson sampling, there will only be one linear model for each action tree. LSTD takes $\kLSTD$ operations, and requires the inner product of two $\dimLSTD$-dimensional vectors.
\end{proof}
The above lemmas give the following result:
\begin{theorem}
  At time $t$, the online complexity of CTBRL is $O((\dimS^3 + \kLSTD \dimLSTD) \ln t)$.
  \label{the:online-complexity}
\end{theorem}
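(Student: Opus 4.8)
The plan is to treat this as a straightforward additive combination of the two preceding lemmas, once we pin down exactly which operations constitute the per-step \emph{online} cost. First I would enumerate the operations carried out inside the main loop of Alg.~\ref{alg:overview} at a single time step $t$, deliberately excluding the episode-boundary work (model sampling and ADP), since those are performed only when \texttt{episode-end} triggers and are accounted for separately; the text already notes that sampling is $O(t)$ and is done infrequently. The per-step operations are then: (i) searching the relevant action tree and inserting a leaf node containing $\vs_t$; (ii) selecting an action by value-function look-up; and (iii) updating the posterior parameters along the root-to-leaf path through the nodes containing $\vs_t$.

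Next I would bound each of these three contributions. Insertion and search in the cover tree are $O(\ln t)$ by Theorems 5 and 6 of \citet{cover-tree:icml2006}, as already used in the corollary on tree construction. The posterior update (step iii) is exactly the inference step, which the inference lemma bounds by $O(\dimS^3 \ln t)$ for $\CS \subset \Reals^\dimS$, the $\dimS^3$ factor arising from the $\dimS \times \dimS$ matrix inversion at each node and the $\ln t$ from the depth bound. Action selection (step ii) is bounded by $O(\kLSTD \dimLSTD \ln t)$ by Lemma~\ref{lem:lstd-decision}. Summing the three, the $O(\ln t)$ insertion term is absorbed into the others, and the total per-step cost is $O(\dimS^3 \ln t) + O(\kLSTD \dimLSTD \ln t) = O((\dimS^3 + \kLSTD \dimLSTD)\ln t)$, which is the claim.

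There is essentially no deep obstacle here; the only point requiring care is the scoping of ``online complexity.'' I would make explicit that Thompson sampling produces a single linear model per action tree, so the decision step incurs no additional marginalisation over contexts, and that the episode-level sampling and ADP cost is genuinely amortised away rather than charged at every step. Getting that bookkeeping right, rather than any analytical difficulty, is what the proof really rests on.
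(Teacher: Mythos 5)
Your proposal is correct and follows the paper's own route exactly: the theorem is obtained by summing the per-step inference cost $O(\dimS^3 \ln t)$ from the inference lemma with the decision cost $O(\kLSTD \dimLSTD \ln t)$ from Lemma~\ref{lem:lstd-decision}, with the $O(\ln t)$ tree-insertion cost absorbed and the episode-level sampling/ADP cost deferred to Theorem~\ref{the:api-complexity}. Your explicit bookkeeping about what counts as ``online'' work is exactly the (implicit) content of the paper's one-line derivation, so there is nothing to add.
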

We now examine the complexity of finding a policy. Although this is the most computationally demanding part, its complexity is not dependent on the cover tree structure or the probabilistic inference method used. However, we include it here for completeness.
\begin{lemma}
  Thompson sampling at time $t$ is $O(t \dimS^3)$.
\end{lemma}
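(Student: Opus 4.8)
The plan is to decompose the cost of Thompson sampling into the two sampling stages described in Section~\ref{sec:thompson} and bound each separately. Recall that drawing a model $\mdp$ from the posterior first samples a partition by drawing a binary weight $\hat{w}_i \in \{0,1\}$ with $\Pr(\hat{w}_i = 1) = w_{i,t}$ at each node, and then, for each context $i$ retained in the resulting partition, samples a covariance matrix $\hat{\MV}_i$ from the inverse-Wishart posterior \eqref{eq:sample-covariance} followed by a design matrix $\hat{\MA}_i$ from the conditional matrix-normal posterior \eqref{eq:sample-mean}. The total cost is the sum of the cost of generating the partition and the cost of sampling the local models over that partition.

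First I would bound the number of nodes over which we sample. Since each time step appends exactly one leaf to a single action tree in Alg.~\ref{alg:overview}, and the reduced tree $\hat{\CT}_a$ of Section~\ref{sec:reduced-tree} keeps only one node per distinct observed point, the total number of nodes across all action trees after $t$ observations is $O(t)$. Consequently the partition-sampling stage draws at most $O(t)$ independent Bernoulli variates, each in constant time, for a cost of $O(t)$; and the number of contexts in the sampled partition is likewise $O(t)$.

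Next I would bound the per-context cost of the second stage. For context $i$ the noise covariance $\hat{\MV}_i$ is drawn from an inverse-Wishart distribution on $\dimS \times \dimS$ matrices, which by the method of~\citet{smith1972wishart} requires a Bartlett-type decomposition and associated triangular matrix products costing $O(\dimS^3)$; plugging $\hat{\MV}_i$ into \eqref{eq:sample-mean} and sampling the resulting matrix-normal design matrix requires Cholesky factorisations of the covariance factors together with a matrix multiplication, again $O(\dimS^3)$. Thus each of the $O(t)$ contexts costs $O(\dimS^3)$, giving $O(t \dimS^3)$ for the second stage.

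Summing the two stages yields $O(t) + O(t\dimS^3) = O(t\dimS^3)$, as claimed. The only slightly delicate point is the node-count bound: one must confirm that the explicit (reduced) representation stores a single node per observed point, so that the tree size grows linearly rather than with the duplication inherent in the implicit representation; given the reduced-tree construction this is immediate. The matrix-sampling costs are then a routine consequence of the cubic cost of factorising $\dimS$-dimensional matrices, and I expect no real obstacle beyond stating these two ingredients carefully.
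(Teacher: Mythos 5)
Your proof is correct and takes essentially the same route as the paper's: bound the number of sampled contexts by the $O(t)$ nodes of the reduced tree, and charge each context $O(\dimS^3)$ for the matrix sampling, dominated by the Wishart generation of~\citet{smith1972wishart}. The paper's own proof is just a terser version of this, omitting the explicit (and asymptotically negligible) accounting of the Bernoulli partition draws and the matrix-normal step that you spell out.
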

\begin{proof}
  In the worst case, our sampled tree will contain all the leaf nodes of the reduced tree, which are $O(t)$. For each sampled node, the most complex operation is Wishart generation, which is $O(\dimS^3)$~\citep{smith1972wishart}.
\end{proof}
\begin{lemma}
  If we use $\nsam$ samples for LSTD estimation
  and the basis dimensionality is  $\dimLSTD$, this step has complexity $O(\dimLSTD^3 + \nsam (\dimLSTD^2 + \kLSTD \dimLSTD \ln t))$.  
  \label{lem:lstd-estimation}
\end{lemma}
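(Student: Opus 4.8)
The plan is to decompose the LSTD estimation into two phases: the assembly of the least-squares system from the $\nsam$ sampled transitions, and the final solve. Recall that LSTD computes the parameter vector $\omega$ by forming a $\dimLSTD \times \dimLSTD$ matrix $A = \sum_i \phi(\vs_i)\bigl(\phi(\vs_i) - \disc\, \bar\phi'_i\bigr)^\top$ together with a $\dimLSTD$-dimensional vector $b = \sum_i \phi(\vs_i)\rho(\vs_i)$, and then solving the linear system $A\omega = b$. Here $\phi(\vs) \in \Reals^{\dimLSTD}$ is the feature vector and $\bar\phi'_i$ is the estimated expected next-state feature entering the second term of \eqref{eq:api}. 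I would first argue that assembling $A$ and $b$ costs $O(\nsam(\dimLSTD^2 + \kLSTD \dimLSTD \ln t))$, and then that the solve costs $O(\dimLSTD^3)$; adding these yields the claimed bound.

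For the assembly phase, I would account for the cost incurred by a single sample $i$ and multiply by $\nsam$. Computing $\bar\phi'_i$ requires estimating the inner integral in \eqref{eq:api} through the empirical kernel \eqref{eq:lstd-approximation}, i.e.\ drawing $\kLSTD$ next states from the sampled model. By exactly the same reasoning as in the proof of Lemma~\ref{lem:lstd-decision}, each such draw requires locating the relevant context via a path search in the action tree, which is $O(\ln t)$, followed by evaluating the $\dimLSTD$-dimensional feature vector; across the $\kLSTD$ draws this contributes $O(\kLSTD \dimLSTD \ln t)$. Given $\phi(\vs_i)$ and $\bar\phi'_i$, the contribution of the sample to $A$ is a rank-one outer product, costing $O(\dimLSTD^2)$, and its contribution to $b$ is $O(\dimLSTD)$, which is dominated. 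Summing the per-sample cost $O(\dimLSTD^2 + \kLSTD \dimLSTD \ln t)$ over the $\nsam$ samples gives the first two terms.

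For the solve phase, once $A$ and $b$ have been accumulated, obtaining $\omega = A^{-1} b$ requires a single inversion (or equivalently a factorisation and back-substitution) of the $\dimLSTD \times \dimLSTD$ matrix $A$, which is $O(\dimLSTD^3)$ and is paid only once, independently of $\nsam$. Combining the two phases yields the bound $O(\dimLSTD^3 + \nsam(\dimLSTD^2 + \kLSTD \dimLSTD \ln t))$, as claimed.

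I expect the only delicate point to be the bookkeeping of the inner sampling of next states: one must be careful that the $O(\ln t)$ tree-search factor attaches to each of the $\kLSTD$ model draws per sample rather than appearing only once, and that this sampling cost is the term carrying the $\ln t$ dependence, while the matrix algebra (the $\dimLSTD^2$ accumulation per sample and the one-off $\dimLSTD^3$ solve) is independent of $t$. Appealing to Lemma~\ref{lem:lstd-decision} for the per-draw cost keeps this step short, and the remaining arithmetic is routine.
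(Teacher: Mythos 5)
Your proof is correct and takes essentially the same route as the paper's: a per-sample cost of $O(\dimLSTD^2 + \kLSTD \dimLSTD \ln t)$ (the $\kLSTD$ model draws each costing $O(\dimLSTD \ln t)$ by the reasoning of Lemma~\ref{lem:lstd-decision}, plus the rank-one updates to the LSTD matrices), summed over the $\nsam$ samples, followed by a one-off $O(\dimLSTD^3)$ inversion. The only cosmetic difference is that the paper attributes the $\kLSTD \dimLSTD \ln t$ term to taking decisions under the current policy while you attribute it to estimating the expected next-state features; both invocations rest on the same per-draw tree-search cost and give the identical bound.
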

\begin{proof}
  For each sample we must take a decision according to the last policy, which requires $O(\kLSTD \dimLSTD \ln t)$ as shown previously. We also need to update two matrices (see \cite{boyan2002technical}), which is $O(\dimLSTD^2)$. So, $O(\nsam (\dimLSTD^2 + \kLSTD \dimLSTD \ln t)  )$ computations  must be performed for the total number of the selected samples. Since LSTD requires an $\dimLSTD \times \dimLSTD$ matrix inversion, with complexity $O(\dimLSTD^3)$, we obtain the final result.
\end{proof}
From Lemmas~\ref{lem:lstd-decision} and \ref{lem:lstd-estimation} it follows that:
\begin{theorem}
  If we employ API with $\kAPI$ iterations, the total complexity of calculating a new policy is $O(t\dimS^3 + \kAPI (\dimLSTD^3 + \nsam (\dimLSTD^2 + \kLSTD \dimLSTD \ln t)))$.
  \label{the:api-complexity}
\end{theorem}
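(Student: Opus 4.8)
The plan is to decompose the policy-calculation procedure of line \ref{alg:ctbrl:adp} into its constituent operations and then add their costs, invoking the lemmas already established. A single policy computation consists of (i) drawing one model $\mdp$ from the posterior by Thompson sampling, and (ii) running approximate policy iteration on that \emph{fixed} sampled model for $\kAPI$ rounds, where each round performs one LSTD policy-evaluation followed by a greedy policy-improvement step. First I would account for the sampling cost: by the preceding lemma on Thompson sampling, generating $\mdp$ costs $O(t\dimS^3)$, and crucially this is paid only once per episode, before the iteration loop begins, so it sits outside the $\kAPI$ factor.

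Next I would bound the cost of the iteration loop. Each of the $\kAPI$ rounds evaluates the current policy by LSTD using $\nsam$ sampled transitions from $\mdp$, which by Lemma~\ref{lem:lstd-estimation} costs $O(\dimLSTD^3 + \nsam(\dimLSTD^2 + \kLSTD\dimLSTD\ln t))$. The only point requiring care is the policy-improvement half of each round: forming the greedy policy $\hat{\pol}_j$ requires evaluating the action-value $q_{\omega_{j-1}}(\vs,a)$ at each of the $\nsam$ representative states, and each such evaluation invokes the sampling approximation \eqref{eq:lstd-approximation} together with a decision look-up. I would observe that this is exactly the per-sample decision cost $O(\kLSTD\dimLSTD\ln t)$ of Lemma~\ref{lem:lstd-decision}, which is already subsumed in the $\nsam\kLSTD\dimLSTD\ln t$ term of Lemma~\ref{lem:lstd-estimation}; hence the improvement step introduces no cost of higher order and the per-round bound stands unchanged. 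Multiplying by the $\kAPI$ rounds then gives $O(\kAPI(\dimLSTD^3 + \nsam(\dimLSTD^2 + \kLSTD\dimLSTD\ln t)))$ for the loop.

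Finally, summing the one-off sampling term with the loop term and absorbing constants into the $O(\cdot)$ notation yields the claimed bound $O(t\dimS^3 + \kAPI(\dimLSTD^3 + \nsam(\dimLSTD^2 + \kLSTD\dimLSTD\ln t)))$. There is no genuine mathematical obstacle here: the argument is a purely additive accounting exercise over three independently-bounded operations. The only subtleties worth stating explicitly are the double-counting check above, which ensures the greedy-improvement cost is not added a second time, and the placement of the $O(t\dimS^3)$ sampling term \emph{outside} the $\kAPI$ factor, reflecting that a model is sampled once per episode rather than once per policy-iteration round.
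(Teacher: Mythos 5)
Your proof is correct and follows essentially the same route as the paper: the $O(t\dimS^3)$ term comes from the Thompson sampling lemma (paid once, outside the iteration loop), and each of the $\kAPI$ API rounds is charged the cost of Lemma~\ref{lem:lstd-estimation}, whose per-sample decision term already absorbs the greedy-improvement look-ups, exactly as you note. Your explicit double-counting check is a welcome clarification of a step the paper leaves implicit, but it is not a different argument.
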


Thus, while the online complexity of CTBRL is only logarithmic in $t$, there is a substantial cost when calculating a new policy. This is only partially due to the complexity of sampling a model, which is manageable when the state space has small dimensionality. Most of the computational effort is taken by the API procedure, at least as long as $t < (\dimLSTD/\dimS)^3$. However, we think this is unavoidable no matter what the model used is.

The complexity of Gaussian process (GP) models is substantially higher. In the simplest model, where each output dimension is modelled independently, inference is $O(\dimS t^3)$, while the fully multivariate tree model has complexity $O(\dimS^3 t \ln t)$. Since there is no closed form method for sampling a function from the process, one must resort to iterative sampling of points. For $n$ points, the cost is approximately $O(n \dimS t^3)$, which makes sampling long trajectories prohibitive. For that reason, in our experiments we only use the mean of the GP.

\section{Experiments}
\label{sec:experiments}
We conducted two sets of experiments to analyse the offline and the online performance. We compared CTBRL with the well-known LSPI algorithm~\citep{lagoudakis2003least} for the offline case, as well as an online variant~\citep{Busoniu10OnlineLSPI} for the online case. We also compared CTBRL with linear Bayesian reinforcement learning \citep[LBRL,][]{ijcai:lbrl} and finally GP-RL, where we simply replaced the tree model with a Gaussian process. For CTBRL and LBRL we use Thompson sampling. However, since Thompson sampling cannot be performed on GP models, we use the mean GP instead. In order to compute policies given a model, all model-based methods use the variant of LSPI explained in Section~\ref{sec:api}.
Hence, the only significant difference between each approach is the model used, and whether or not they employ Thompson sampling.

A significant limitation of Gaussian processes is that their computational complexity becomes prohibitive as the number of samples becomes extremely large. In order to make the GP model computationally practical, the greedy approximation approach introduced by~\citet{Engel02} has been adopted. This is a kernel sparsification methodology which incrementally constructs a dictionary of the most representative states. More specifically, an \emph{approximate linear dependence} analysis is performed in order to examine whether a state can be approximated sufficiently as a linear combination of current dictionary members or not.

We used one preliminary run and guidance from the literature to make an initial selection of possible hyper-parameters, such as the number of samples and the features used for LSTD and LSTD-$Q$. We subsequently used $10$ runs to select a single hyper-parameter combination for each algorithm-domain pair. The final evaluation was done over an independent set of $100$ runs.

For CTBRL and the GP model, we had the liberty to draw an arbitrary number of trajectories for the value function estimation. We drew $1$-step transitions from a set of $3000$ uniformly drawn states from the \emph{sampled} model (the mean model in the GP case). We used 25 API iterations on this data.

For the offline performance evaluation, we first drew rollouts from $k = \{10,20, \ldots , 50,$ $100, \ldots ,1000\}$ states drawn from the \emph{true environment's} starting distribution, using a uniformly random policy. The maximum horizon of each rollout was set equal to $40$. The collected data was then fed to each algorithm in order to produce a policy. This policy was evaluated over $1000$ rollouts on the environment. 

In the online case, we simply use the last policy calculated by each algorithm at the end of the last episode, so there is no separate learning and evaluation phase. This means that efficient exploration must be performed. For CTBRL, this is done using Thompson sampling. For online-LSPI, we followed the approach of~\citep{Busoniu10OnlineLSPI}, who adopts an $\epsilon$-greedy exploration scheme with an exponentially decaying schedule $\epsilon_t = \epsilon_d^t$, with $\epsilon_0 = 1$. In preliminary experiments, we found $\epsilon_d = 0.997$ to be a reasonable compromise. We compared the algorithms online for $1000$ episodes. 

\begin{figure}[ht]
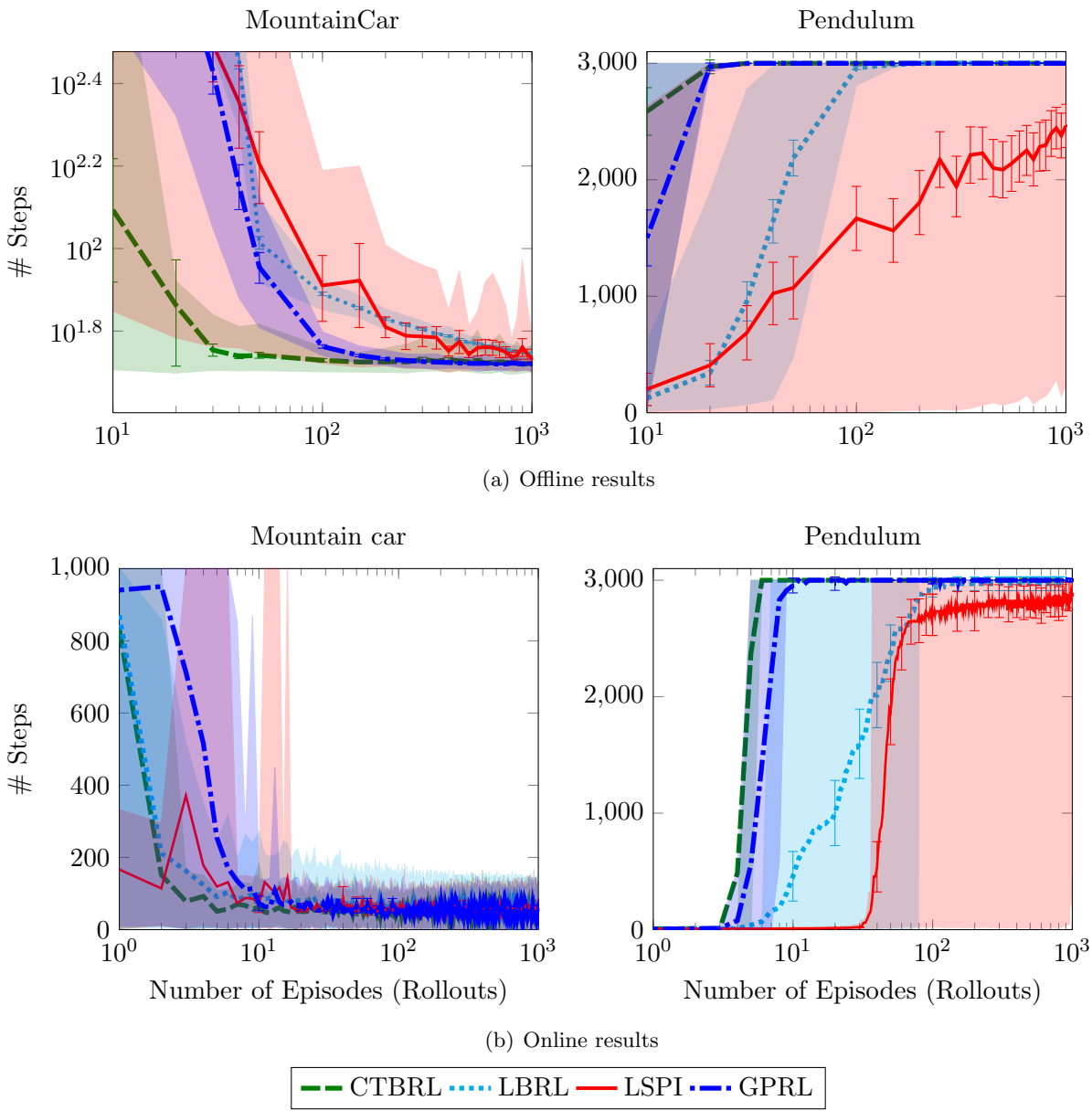

  \begin{center}  
    \subfigure[Offline results]{
    	  \hspace{-2em}
      \input{figures/Mountain_Offline_GP_LBRL.tikz}
      \hfill
%
%
\begin{tikzpicture}

\begin{semilogxaxis}[%
legend columns=-1,
legend entries={CTBRL, LBRL, LSPI, GPRL},
legend to name=leg:offline,
title={Pendulum},
width=0.4\columnwidth,
scale only axis,
xmin=10,
xmax=1000,
ymin=0,
ymax=3100
]
\addplot [
color=green!50!black,
dash pattern=on 8pt off 2pt on 8pt off 2pt,
line width=2.0pt
]
table[row sep=crcr]{
10 2586.83251\\
20 2970.14087\\
30 3000\\
40 3000\\
50 3000\\
100 3000\\
150 3000\\
200 3000\\
250 3000\\
300 3000\\
350 3000\\
400 3000\\
450 3000\\
500 3000\\
550 3000\\
600 3000\\
650 3000\\
700 3000\\
750 3000\\
800 3000\\
850 3000\\
900 3000\\
950 3000\\
1000 3000\\
};
\addplot [
color=green!50!black,
line width=1.0pt,
only marks,
mark=o,
mark options={scale=0.0},
forget plot
]
plot [error bars/.cd, y dir = both, y explicit]
coordinates{
(10,2586.83251) +- (0.0,204.401760563742)(20,2970.14087) +- (0.0,59.246991905623)(30,3000) +- (0.0,0)(40,3000) +- (0.0,0)(50,3000) +- (0.0,0)(100,3000) +- (0.0,0)(150,3000) +- (0.0,0)(200,3000) +- (0.0,0)(250,3000) +- (0.0,0)(300,3000) +- (0.0,0)(350,3000) +- (0.0,0)(400,3000) +- (0.0,0)(450,3000) +- (0.0,0)(500,3000) +- (0.0,0)(550,3000) +- (0.0,0)(600,3000) +- (0.0,0)(650,3000) +- (0.0,0)(700,3000) +- (0.0,0)(750,3000) +- (0.0,0)(800,3000) +- (0.0,0)(850,3000) +- (0.0,0)(900,3000) +- (0.0,0)(950,3000) +- (0.0,0)(1000,3000) +- (0.0,0)};

\addplot[solid,fill=green!50!black,opacity=2.000000e-01,draw=green!50!black,forget plot]
table[row sep=crcr]{
x y\\
10 3000 \\
20 3000 \\
30 3000 \\
40 3000 \\
50 3000 \\
100 3000 \\
150 3000 \\
200 3000 \\
250 3000 \\
300 3000 \\
350 3000 \\
400 3000 \\
450 3000 \\
500 3000 \\
550 3000 \\
600 3000 \\
650 3000 \\
700 3000 \\
750 3000 \\
800 3000 \\
850 3000 \\
900 3000 \\
950 3000 \\
1000 3000 \\
1000 3000 \\
950 3000 \\
900 3000 \\
850 3000 \\
800 3000 \\
750 3000 \\
700 3000 \\
650 3000 \\
600 3000 \\
550 3000 \\
500 3000 \\
450 3000 \\
400 3000 \\
350 3000 \\
300 3000 \\
250 3000 \\
200 3000 \\
150 3000 \\
100 3000 \\
50 3000 \\
40 3000 \\
30 3000 \\
20 3000 \\
10 11.298 \\
};

\addplot [
color=cyan,
dotted,
line width=2.0pt
]
table[row sep=crcr]{
10 123.88523\\
20 342.52202\\
30 956.25758\\
40 1643.15867\\
50 2187.41577\\
100 2960.22154\\
150 2987.77889\\
200 2999.213\\
250 2999.89718\\
300 2999.97106\\
350 3000\\
400 2999.98596\\
450 3000\\
500 3000\\
550 3000\\
600 3000\\
650 3000\\
700 3000\\
750 3000\\
800 3000\\
850 3000\\
900 3000\\
950 3000\\
1000 2999.97571\\
};
\addplot [
color=cyan,
line width=1.0pt,
only marks,
mark=o,
mark options={scale=0.0},
forget plot
]
plot [error bars/.cd, y dir = both, y explicit]
coordinates{
(10,123.88523) +- (0.0,61.1619462895461)(20,342.52202) +- (0.0,105.492360493034)(30,956.25758) +- (0.0,168.390647069662)(40,1643.15867) +- (0.0,186.486365867352)(50,2187.41577) +- (0.0,153.195679963919)(100,2960.22154) +- (0.0,13.0920403811924)(150,2987.77889) +- (0.0,9.13939630676259)(200,2999.213) +- (0.0,0.380578503243214)(250,2999.89718) +- (0.0,0.127225259274383)(300,2999.97106) +- (0.0,0.0574232385788491)(350,3000) +- (0.0,0)(400,2999.98596) +- (0.0,0.0278584060001776)(450,3000) +- (0.0,0)(500,3000) +- (0.0,0)(550,3000) +- (0.0,0)(600,3000) +- (0.0,0)(650,3000) +- (0.0,0)(700,3000) +- (0.0,0)(750,3000) +- (0.0,0)(800,3000) +- (0.0,0)(850,3000) +- (0.0,0)(900,3000) +- (0.0,0)(950,3000) +- (0.0,0)(1000,2999.97571) +- (0.0,0.0481966297538747)};

\addplot[solid,fill=cyan,opacity=2.000000e-01,draw=cyan,forget plot]
table[row sep=crcr]{
x y\\
10 604.047 \\
20 1892.441 \\
30 2779.38 \\
40 2979.974 \\
50 2986.343 \\
100 3000 \\
150 3000 \\
200 3000 \\
250 3000 \\
300 3000 \\
350 3000 \\
400 3000 \\
450 3000 \\
500 3000 \\
550 3000 \\
600 3000 \\
650 3000 \\
700 3000 \\
750 3000 \\
800 3000 \\
850 3000 \\
900 3000 \\
950 3000 \\
1000 3000 \\
1000 3000 \\
950 3000 \\
900 3000 \\
850 3000 \\
800 3000 \\
750 3000 \\
700 3000 \\
650 3000 \\
600 3000 \\
550 3000 \\
500 3000 \\
450 3000 \\
400 3000 \\
350 3000 \\
300 3000 \\
250 3000 \\
200 2994.702 \\
150 2960.292 \\
100 2812.21 \\
50 474.952 \\
40 111.24 \\
30 70.955 \\
20 33.348 \\
10 14.141 \\
};

\addplot [
color=red,
solid,
line width=1.5pt
]
table[row sep=crcr]{
10 201.20145\\
20 407.62562\\
30 686.50382\\
40 1023.7691\\
50 1072.35869\\
100 1668.19503\\
150 1564.48683\\
200 1804.66971\\
250 2173.93693\\
300 1942.55423\\
350 2212.71597\\
400 2228.47637\\
450 2100.54835\\
500 2086.56334\\
550 2139.3581\\
600 2199.40759\\
650 2252.0236\\
700 2174.93242\\
750 2284.40855\\
800 2296.377\\
850 2397.1201\\
900 2444.22461\\
950 2380.22252\\
1000 2462.47889\\
};
\addplot [
color=red,
line width=1.0pt,
only marks,
mark=asterisk,
mark options={scale=0.0},
forget plot
]
plot [error bars/.cd, y dir = both, y explicit]
coordinates{
(10,201.20145) +- (0.0,139.031246177603)(20,407.62562) +- (0.0,185.647000204433)(30,686.50382) +- (0.0,233.063464444597)(40,1023.7691) +- (0.0,268.078977788974)(50,1072.35869) +- (0.0,267.269688786894)(100,1668.19503) +- (0.0,275.042791437383)(150,1564.48683) +- (0.0,273.079596793041)(200,1804.66971) +- (0.0,274.951951754127)(250,2173.93693) +- (0.0,239.18438246924)(300,1942.55423) +- (0.0,259.734751639558)(350,2212.71597) +- (0.0,241.777891440222)(400,2228.47637) +- (0.0,223.417945151487)(450,2100.54835) +- (0.0,245.990357108326)(500,2086.56334) +- (0.0,241.28875074745)(550,2139.3581) +- (0.0,238.593516106546)(600,2199.40759) +- (0.0,219.359588789437)(650,2252.0236) +- (0.0,214.480682120049)(700,2174.93242) +- (0.0,227.710473514836)(750,2284.40855) +- (0.0,209.88666534321)(800,2296.377) +- (0.0,216.719723613883)(850,2397.1201) +- (0.0,196.122321921655)(900,2444.22461) +- (0.0,178.734178048265)(950,2380.22252) +- (0.0,192.340811980663)(1000,2462.47889) +- (0.0,185.244612944156)};

\addplot[solid,fill=red,opacity=2.000000e-01,draw=red,forget plot]
table[row sep=crcr]{
x y\\
10 2633.189 \\
20 3000 \\
30 3000 \\
40 3000 \\
50 3000 \\
100 3000 \\
150 3000 \\
200 3000 \\
250 3000 \\
300 3000 \\
350 3000 \\
400 3000 \\
450 3000 \\
500 3000 \\
550 3000 \\
600 3000 \\
650 3000 \\
700 3000 \\
750 3000 \\
800 3000 \\
850 3000 \\
900 3000 \\
950 3000 \\
1000 3000 \\
1000 226.084 \\
950 141.679 \\
900 286.822 \\
850 204.906 \\
800 141.285 \\
750 114.836 \\
700 77.07 \\
650 153.447 \\
600 102.694 \\
550 83.371 \\
500 64.69 \\
450 85.678 \\
400 64.751 \\
350 37.459 \\
300 21.458 \\
250 29.448 \\
200 21.625 \\
150 16.795 \\
100 5.175 \\
50 6 \\
40 7.17 \\
30 8.037 \\
20 5.301 \\
10 5 \\
};

\addplot [
color=blue,
dash pattern=on 2pt off 2pt on 9pt off 2pt,
line width=2.0pt
]
table[row sep=crcr]{
10 1501.15241\\
20 2970.01618\\
30 2998.58864\\
40 3000\\
50 3000\\
100 3000\\
150 3000\\
200 3000\\
250 3000\\
300 3000\\
350 3000\\
400 3000\\
450 3000\\
500 3000\\
550 3000\\
600 3000\\
650 3000\\
700 3000\\
750 3000\\
800 3000\\
850 3000\\
900 3000\\
950 3000\\
1000 3000\\
};
\addplot [
color=blue,
line width=0.5pt,
only marks,
mark=square,
mark options={scale=0.0},
forget plot
]
plot [error bars/.cd, y dir = both, y explicit]
coordinates{
(10,1501.15241) +- (0.0,240.074136641562)(20,2970.01618) +- (0.0,32.4260371541864)(30,2998.58864) +- (0.0,2.3895590831653)(40,3000) +- (0.0,0)(50,3000) +- (0.0,0)(100,3000) +- (0.0,0)(150,3000) +- (0.0,0)(200,3000) +- (0.0,0)(250,3000) +- (0.0,0)(300,3000) +- (0.0,0)(350,3000) +- (0.0,0)(400,3000) +- (0.0,0)(450,3000) +- (0.0,0)(500,3000) +- (0.0,0)(550,3000) +- (0.0,0)(600,3000) +- (0.0,0)(650,3000) +- (0.0,0)(700,3000) +- (0.0,0)(750,3000) +- (0.0,0)(800,3000) +- (0.0,0)(850,3000) +- (0.0,0)(900,3000) +- (0.0,0)(950,3000) +- (0.0,0)(1000,3000) +- (0.0,0)};

\addplot[solid,fill=blue,opacity=2.000000e-01,draw=blue,forget plot]
table[row sep=crcr]{
x y\\
10 3000 \\
20 3000 \\
30 3000 \\
40 3000 \\
50 3000 \\
100 3000 \\
150 3000 \\
200 3000 \\
250 3000 \\
300 3000 \\
350 3000 \\
400 3000 \\
450 3000 \\
500 3000 \\
550 3000 \\
600 3000 \\
650 3000 \\
700 3000 \\
750 3000 \\
800 3000 \\
850 3000 \\
900 3000 \\
950 3000 \\
1000 3000 \\
1000 3000 \\
950 3000 \\
900 3000 \\
850 3000 \\
800 3000 \\
750 3000 \\
700 3000 \\
650 3000 \\
600 3000 \\
550 3000 \\
500 3000 \\
450 3000 \\
400 3000 \\
350 3000 \\
300 3000 \\
250 3000 \\
200 3000 \\
150 3000 \\
100 3000 \\
50 3000 \\
40 3000 \\
30 3000 \\
20 2927.038 \\
10 44.008 \\
};

\end{semilogxaxis}
\end{tikzpicture}%
      \label{fig:offline}
    }  
    \subfigure[Online results]{
    	 \hspace{-2em}
      \input{figures/Mountain_Online_GP_LBRL.tikz}
      \hfill
      \input{figures/Pendulum_Online_GP_LBRL.tikz}
      \label{fig:online}
    }
    \\
    \ref{leg:offline}
  \end{center}
  \caption{Experimental evaluation. The dashed line shows CTBRL, the dotted line shows LBRL, the solid line shows LSPI, while the dash-dotted line shows GPRL. The error bars denote $95\%$ confidence intervals for the mean (i.e. statistical significance). The shaded regions denote $90\%$ percentile performance (i.e. robustness) across runs. In all cases, CTBRL converges significantly quicker than the other approaches. In addition, as the percentile regions show, it is also much more stable than LBRL, GPRL and LSPI.}
  \label{fig:results}
\end{figure}

\subsection{Domains}
\label{sec:domains}

We consider two well-known continuous state, discrete-action, episodic domains. The first is the inverted pendulum domain and the second is the mountain car domain.

\subsubsection{Inverted pendulum}
The goal in this domain, is to balance a pendulum by applying forces of a mixed magnitude (50 Newtons). The state space consists of two continuous variables, the vertical angle and the angular velocity of the pendulum.
There are three actions: no force, left force or right force. A zero reward is received at each time step except in the case where the pendulum falls.
In this case, a negative (-1) reward is given and a new episode begins.
An episode also ends with $0$ reward after $3000$ steps, after which we consider that the pendulum is successfully balanced.
Each episode starts by setting the pendulum in a perturbed state close to the equilibrium point. More information about the specific dynamics can be found at~\citep{lagoudakis2003least}. The discount factor $\disc$ was $0.95$. The basis we used for LSTD/LSPI, was equidistant $3\times3$ grid of RBFs  over the state space following the suggestions of~\citet{lagoudakis2003least}. This was replicated for each action for the LSTD-$Q$ algorithm used in LSPI.

\subsubsection{Mountain car}
The aim in this domain is to drive an underpowered car to the top of a hill.
Two continuous variables characterise the vehicle state in the domain, its position and its velocity. The objective is to drive an underpowered vehicle up a steep valley from a randomly selected position to the right hilltop (at position $> 0.5$) within $1000$ steps. There are three actions:  forward, reverse and zero throttle. The received reward is $-1$ except in the case where the target is reached (zero reward). At the beginning of each rollout, the vehicle is positioned to a new state, with the position and the velocity uniformly randomly selected. The discount factor is set to $\disc = 0.999$. An equidistant $4 \times 4$ grid of RBFs over the state space plus a constant term is selected for LSTD and LSPI. 

\subsection{Results}
\label{sec:results}
In our results, we show the average performance in terms of number of steps of each method, averaged over 100 runs. For each average, we also plot the 95\% confidence interval for the accuracy of the mean estimate with error bars. In addition, we show the 90\% percentile region of the runs, in order to indicate inter-run variability in performance.

Figure~\ref{fig:offline} shows the results of the experiments in the \emph{offline} case. For the \emph{mountain car}, it is clear that CTBRL is significantly more stable compared to GPRL and LSPI. In contrast to the other two approaches, CTBRL needs only a small number of rollouts in order to discover the optimal policy. For the \emph{pendulum} domain, the performance of both CTBRL and GPRL is almost perfect, as they need only about twenty rollouts in order to discover the optimal policy. On the other hand, LSPI despite the fact that manages to find the optimal policy frequently, around 5\% of its runs fail.

Figure~\ref{fig:online} shows the results of the experiments in the \emph{online} case. For the \emph{mountain car}, CTBRL managed to find an excellent policy in the vast majority of runs, while converging earlier than GPRL and LSPI. Moreover, CTBRL presents a more stable behaviour in contrast to the other two. In the \emph{pendulum} domain, the performance difference  relative to LSPI is even more impressive. It becomes apparent that both CTBRL and GPRL reach near optimal performances with an order of magnitude fewer episodes than LSPI, while the latter remains unstable. In this experiment, we see that CTBRL reaches an optimal policy slightly before GPRL. Although the difference is small, it is very consistent.

The success of CTBRL over the other approaches can be attributed to a number of reasons. Firstly, it could be a better model. Indeed, in the offline results for the mountain car domain, where the starting state distribution is uniform, and all methods have the same data, we can see that CTBRL has a far better performance than everything else. The second could be the more efficient exploration afforded by Thompson sampling. Indeed, in the mountain car online experiments we see that the LBRL performs quite well (Fig.~\ref{fig:online}), even though its offline performance is not very good (Fig.~\ref{fig:offline}). However, Thompson sampling is not sufficient for obtaining a good performance, as seen by both the offline results and the performance in the pendulum domain.

\section{Conclusion}
\label{sec:conclusion}
We proposed a computationally efficient, fully Bayesian approach for the exact inference of unknown dynamics in continuous state spaces. The total computation for inference after $t$ steps is $O(t \ln t)$,  in stark contrast to other non-parametric models such as Gaussian processes, which scale $O(t^3)$. In addition, inference is naturally performed online, with the computational cost at time $t$ being $O(\ln t)$. 

In practice, the computational complexity is orders of magnitude lower for cover trees than GP, even for these problems. We had to use a dictionary and a lot of tuning to make GP methods work, while cover trees worked out of the box. Another disadvantage of GP methods is that it is infeasible to implement Thompson sampling with them. This is because it is not possible to directly sample a function from the GP posterior. Although Thompson sampling confers no advantage in the offline experiments (as the data there were the same for all methods), we still see that the performance of CTBRL is significantly better on average and that it is much more stable.

Experimentally, we showed that cover trees are more efficient both in terms of computation and in terms of reward, relative to GP models that used the same ADP method to optimise the policy and to a linear Bayesian model which used both the same ADP method and the same exploration strategy. We can see that overall the linear model performs significantly worse than both GP-RL and CTBRL, though better than $\epsilon$-greedy LSPI. This shows that the main reason for the success of CTBRL is the cover tree inference and not the linear model itself, or Thompson sampling.

CTBRL is particularly good in online settings, where the exact inference, combined with the efficient exploration provided by Thompson sampling give it an additional advantage. We thus believe that CTBRL is a method that is well-suited for exploration in unknown continuous state problems. 
Unfortunately, it is not possible to implement Thompson sampling in practice using GPs, as there is no reasonable way to sample a function from the GP posterior. 
Nevertheless, we found that in both online and offline experiments (where Thompson sampling should be at a disadvantage) the cover tree method achieved superior performance to Gaussian processes. 

Although we have demonstrated the method in low dimensional problems, higher dimensions are not a problem for the cover tree inference itself. The bottleneck is the value function estimation and ADP. This is independent of the  model used, however. For example, GP methods for estimating the value function ~\citep[c.f.][]{Deisenroth2009} typically have a large number of hyper-parameters for value function estimation, such as choice of representative states and trajectories, kernel parameters and method for updating the dictionary, to avoid problems with many observations.

While in practice ADP  can be performed in the background while inference is taking place, and although we seed the ADP with the previous solution, one would ideally like to use a more incremental approach for that purpose. One interesting idea would be to employ a gradient approach in a similar vein to \citet{Deisenroth2011:pilco}. An alternative approach would be to employ an online method, in order to avoid estimating a policy for the complete space.\footnote{A suggestion made by the anonymous reviewers.} Promising such approaches include running bandit-based tree search methods such as UCT~\citep{ECML:Kocsis+Szepesvari:2006} on the sampled models. 

Another direction of future work is to consider more sophisticated exploration policies, particularly for larger problems. Due to the efficiency of the model, it should be possible to compute near-Bayes-optimal policies by applying the tree search method used by~\citet{DBLP:journals/jair/VenessNHUS11}. Finally, it would be interesting to examine continuous actions. These can be handled efficiently both by the cover tree and the local linear models by making the next state directly dependent on the action through an augmented linear model. While optimising over a continuous action space is challenging, more recent efficient tree search methods such as metric bandits~\citep{Bubeck:X-Armed-Bandits:JMLR} may alleviate that problem.

An interesting theoretical direction would be to obtain regret bounds for the problem. This could perhaps be done building upon the analyses of \citet{Kozat07} for context tree prediction, and of \citet{ortner:continuous-rl} for continuous MDPs. The statistical efficiency of the method could be improved by considering edge-based (rather than node-based) distributions on trees, as was suggested by~\citet{Pereira99anefficient}.

Finally, as the cover tree method only requires specifying an appropriate metric, the method could be applicable to many other problems. This includes both large discrete problems, and partially observable problems. It would be interesting to see if the approach also gives good results in those cases.

\section*{Acknowledgements}
We would like to thank the anonymous reviewers, for their careful and detailed comments and suggestions, for this and previous versions of the paper, which have significantly improved the manuscript. We also want to thank Mikael K\r{a}geb\"{a}ck for additional proofreading. This work was partially supported  by the Marie Curie Project ESDEMUU ``Efficient Sequential Decision Making Under Uncertainty'' , Grant Number 237816 and by an ERASMUS exchange grant.



\bibliography{misc}

\end{document}